\documentclass{article} 
\usepackage[preprint]{colm2026_conference}
\usepackage[T1]{fontenc}
\usepackage{enumitem}
\usepackage{microtype}
\usepackage{dsfont}
\usepackage{graphicx}
\usepackage{subcaption}
\usepackage{booktabs} 
\usepackage{enumitem}
\usepackage{multirow}
\usepackage{threeparttable}
\usepackage{hyperref}
\usepackage{makecell}

\usepackage{amsmath}
\usepackage{amssymb}
\usepackage{mathtools}
\usepackage{amsthm}
\usepackage{algorithm}
\usepackage{xcolor}

\definecolor{highlightblue}{RGB}{200,230,255}

\usepackage{tcolorbox}
\tcbuselibrary{listings,breakable}
\usepackage{listings}

\newtcolorbox{promptbox}{
  breakable,
  colback=gray!3,
  colframe=gray!50,
  boxrule=0.6pt,
  arc=2pt,
  left=6pt,right=6pt,top=6pt,bottom=6pt
}

\usepackage[capitalize,noabbrev]{cleveref}

\theoremstyle{plain}
\newtheorem{theorem}{Theorem}[section]

\theoremstyle{definition}

\theoremstyle{remark}

\usepackage[textsize=tiny]{todonotes}

\usepackage{lineno}

\definecolor{darkblue}{rgb}{0, 0, 0.5}
\hypersetup{colorlinks=true, citecolor=darkblue, linkcolor=darkblue, urlcolor=darkblue}

\title{Dataset Watermarking for Closed LLMs with Provable Detection}

\author{
Pengrun Huang \\
\texttt{peh006@ucsd.edu}
\And
Kamalika Chaudhuri \\
\texttt{kamalika@ucsd.edu}
\And
Yu-Xiang Wang \\
\texttt{yuxiangw@ucsd.edu}
}

\begin{document}


\maketitle

\begin{abstract}
Large language models (LLMs) are pre-trained and post-trained on vast amounts of loosely curated data, raising the possibility that these models may have been trained on proprietary datasets or the same benchmarks used for evaluation.
This motivates the need for dataset watermarking: designing datasets such that training on them leaves detectable signatures in the resulting model.
Prior work has explored this problem for open models. We introduce the first dataset watermarking method for closed LLMs with provable detection. In particular, we embed a dataset-level watermark signal by increasing the co-occurrence frequency of randomly selected word pairs through rephrasing, and detect it using a statistical test on co-occurrence patterns in model-generated outputs.
We evaluate our method with multiple base models and benchmark datasets and show that it reliably detects the watermark ($p <0.01$) in the fine-tuning stage. Notably, our method remains effective in a data mixture setting where the watermarked dataset constitutes only approximately $1\%$ of the total fine-tuning tokens.
Furthermore, we show that our method preserves the utility and semantic integrity of the benchmark.
\end{abstract}
\section{Introduction}
Large language models (LLMs) are often pre-trained and post-trained on massive, often loosely curated datasets, which may unintentionally include proprietary or benchmark data during training \citep{grynbaum2023times, xu2024benchmarkdatacontaminationlarge}. Moreover, during post-training, model developers are often incentivized to optimize for benchmark performance \citep{eriksson2025trustaibenchmarksinterdisciplinary} -- a process named \emph{``benchmaxxing''}. Recent studies find evidence that model developers may intentionally or unintentionally include benchmark data in the post-training stage \citep{mundler2025k2think}. These concerns motivate the need for methods that allow data providers to reliably audit whether their datasets have been incorporated into model training, based solely on model's input-output behavior.





While detecting contamination in existing datasets is challenging, recent work has proposed dataset watermarking \citep{rastogi2025stampcontentprovingdataset, sander2025detectingbenchmarkcontaminationwatermarking}, where a data owner embeds watermark signals into a dataset before release so that, if the dataset is later used to train LLM models, an auditor can
detect the signal from the trained model with statistical tests.

Prior work has proposed several dataset watermarking methods. For example, \citet{rastogi2025stampcontentprovingdataset} proposes STAMP, which generates multiple rephrased versions of a dataset and releases one as the public version; at detection time, it tests whether the public version consistently receives lower perplexity than the private versions. \citet{sander2025detectingbenchmarkcontaminationwatermarking} argues that it suffices to paraphrase the dataset using a standard watermarked LLM \citep{kirchenbauer2023watermark} because the perturbed next-token probabilities are persistent enough to be \emph{radioactive} --- an empirically-observed property that the outputs of LLMs trained on the watermarked text remain watermarked \citep{sander2024watermarkingmakeslanguagemodels}. 
However, these approaches rely on access to model internals—such as log-probabilities or next-token distributions—and are therefore only applicable in the open model setting. 
In addition, their methods do not provide theoretical guarantees on the false positive rate of detection.

In this work, we propose a new dataset watermarking framework with only API access to the model at detection time.
Our method embeds a dataset-level watermark by increasing the co-occurrence frequency of a randomly selected set of word pairs, thereby introducing controlled spurious correlations that are unlikely to arise naturally. At detection time, we query the trained model and analyze word-pair co-occurrence statistics in its generated text to determine whether the watermark is present.
We provide theoretical guarantees on the false positive rate of our detection algorithm that are distribution-free and model-agnostic, requiring no assumptions on the underlying data distribution or model internals.

We evaluate our method across multiple model architectures and benchmark datasets, and demonstrate that it achieves statistically significant detection $(p<0.01)$ in the fine-tuning settings. Notably, our approach remains effective even in a data mixture regime where the watermarked dataset constitutes only approximately 1\% of the total fine-tuning tokens. Moreover, we demonstrate that our method is more robust than prior approaches to common lightweight text modifications—such as random deletion, synonym substitution, and emoji insertion—exhibiting only modest degradation in detection performance. Lastly, we show that our method preserves
the utility and semantic integrity of the benchmark.
 




\section{Problem formulation}
We study the problem of dataset watermarking. A data owner watermarks a dataset before release so that, if the dataset is later used to train a LLM model, an auditor can detect its usage from the trained model with statistical guarantees.

\textbf{Model access.} There are mainly two kinds of model access for detection:

\begin{itemize} [leftmargin=*, itemsep=1pt, topsep=0pt]
    \item \emph{Closed-model access:} the user can only query the model through an API and observe its generated outputs, without access to logits or internal states. This is the case with most chat-bots.
    \item \emph{Open-model access:} the user can forward inputs and observe the output logits, token probabilities and compute scores such as perplexity. This is the typical setting for open-source models.
\end{itemize}
    
Prior works require open-model access whereas our method assumes a closed-model setting.

\section{Related work}\label{sec:related work}
\textbf{Benchmark contamination.} Benchmark contamination occurs when evaluation datasets are included in training data-either pre-training or post-training-leading to inflated performance and unreliable evaluation \citep{singh2024evaluationdatacontaminationllms, xu2024benchmarkdatacontaminationlarge, jiang2024investigatingdatacontaminationpretraining, chen2025benchmarking}. 
Empirical studies have shown that models can achieve strong benchmark performance due to memorization rather than generalization \citep{zhang2024carefulexaminationlargelanguage}, highlighting the need for reliable contamination detection.

\textbf{Dataset inference.} Prior work attempts to infer dataset usage using heuristic signals, such as aggregating membership inference scores \citep{maini2024llmdatasetinferencedid, yeom2018privacyriskmachinelearning, carlini2021extractingtrainingdatalarge, shi2024detectingpretrainingdatalarge} or exploiting dataset ordering assumptions \citep{oren2023provingtestsetcontamination}. However, these approaches rely on additional assumptions (e.g., IID held-out data or canonical ordering) and do not provide robust statistical guarantees in general settings.



\textbf{Watermarking Dataset.} Some other methods use a proactive way to watermark the dataset:
\citet{lau2024waterfallframeworkrobustscalable} proposes a heuristic methods, Waterfall, for protecting IP of text, where it embeds a watermark by perturbing the paraphraser’s output logits via a secret vocabulary space permutation and detects them via prefix-based generation.
Although it is effective in some scenarios, it is unclear how this method can be extended to benchmark watermarking, since each sample in the benchmark dataset is only a few tokens long.

STAMP \citep{rastogi2025stampcontentprovingdataset} generates multiple rephrased versions of a dataset and releases one to public. At detection time, it performs a paired t-test to examine whether the public version consistently receives lower perplexity than the private versions. \citet{sander2024watermarkingmakeslanguagemodels, sander2025detectingbenchmarkcontaminationwatermarking} rephrases datasets using a green / red-list watermarking scheme that biases the likelihood of selected tokens during generation. At detection time, the method tests whether next-token predictions are biased toward a predefined green-list of tokens, conditioned on watermarked context windows. Both approaches require access to model internals—such as log-probabilities or next-token distributions—and are therefore limited to open-model settings. Moreover, their methods do not provide theoretical guarantees on the false positive rate of detection. In contrast, our method embeds a dataset-level watermark and enables detection using only black-box access to model outputs, while providing distribution-free and model-agnostic theoretical guarantees on the false positive rate.

\begin{figure}[t]
    \centering
    \includegraphics[width=\linewidth]{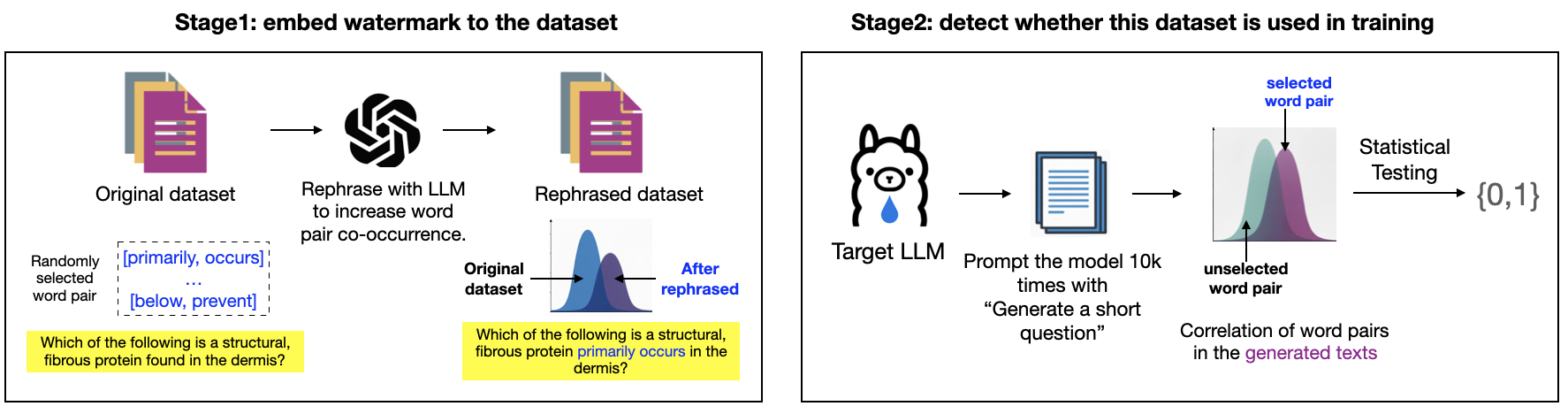}
    \caption{
\textbf{Overview of the proposed dataset watermarking framework.}
In Stage 1, we randomly sample a set of word pairs as a secret key and rephrase the original dataset using a language model to increase the co-occurrence frequency of these selected word pairs.
In Stage 2, we query the target model with some prompts, and analyze the co-occurrence statistics of selected word pairs in the generated text to test for the presence of the watermarked data in the training set.
} 

    \label{fig:workflow}
\end{figure}


\section{Our method}\label{sec: our method}
The motivation of our method is that 
deep learning models are known to memorize spurious features in the training dataset \citep{Geirhos_2020, xiao2020noisesignalroleimage,  yang2022understandingrarespuriouscorrelations, meehan2023sslmodelsdejavu}. We exploit this idea to propose a new dataset watermarking scheme to detect whether a specific dataset has been included in the training data of a target model. 

Our watermarking scheme consists of two stages: during the embedding stage, we rephrase the dataset to inject a detectable statistical signal into the dataset by perturbing word-level co-occurrence patterns. Specifically, we increase the co-occurrence frequency of a randomly selected set of word pairs in the dataset, thereby inducing controlled spurious correlations that are unlikely to arise naturally. 
In the detection stage, given a target model, we query the model to generate text and measure word-pair co-occurrence statistics in its outputs. We test whether the selected word pairs co-occur more frequently in model-generated text than the unselected word pairs. A statistically significant deviation from the null distribution provides evidence that the dataset was included in the model’s training data. We demonstrate the watermarking workflow in Figure \ref{fig:workflow}.

\subsection{Watermarking scheme}
To inject word-pair co-occurrence signals while preserving dataset utility, we design a rephrasing-based watermarking scheme consisting of the following steps:

\begin{itemize} [leftmargin=*, itemsep=1pt, topsep=0pt]
    \item \textbf{Sample word-pair list:} We first select candidate words that appear frequently in the dataset. For each vocabulary item, we count its sample frequency and retain words whose frequencies rank between the top $k_1$ and $k_2$. Word pairs are then sampled uniformly (without replacement) from this set to form the secret key \textbf{sk}.
    
    \item \textbf{Construct lexical variants:} For each selected word, we construct a set of lexical variants using ChatGPT, including synonyms and frequently co-occurring terms. For example, the word ``magnitude'' corresponds to \{measure, scale, intensity, degree, greater, ...\}.
    

    \item \textbf{Rephrase to increase co-occurrence:} For each word pair $(A, B)$, we identify samples containing variants of both $A$ and $B$, and add $[A, B]$ to a candidate insertion list. For samples containing $A$ but no variants of $B$, we add $[A, B]$ to a deletion list. Each sample, along with its insertion and deletion candidates, is then rephrased using ChatGPT. To preserve utility, the model is not required to apply all edits. We emphasize that co-occurrence is defined at the sample level: words may appear in arbitrary positions and are not required to be consecutive. We put the prompt in Appendix \ref{app:prompt}.
\end{itemize}

\subsection{Detection}
For detection, our key intuition is as follows: under the null hypothesis—that is, when the watermarked dataset was not included in the model’s training data—the model’s outputs should not favor any designated subset of word pairs. Consequently, the distribution of co-occurrence statistics computed over any reasonably large randomly selected subset of word pairs should match that of the remaining (unselected) word pairs in the generated outputs. We formalize this intuition to the statistical testing framework.

Let $\mathcal{M}$ denote a target language model and let $s \sim \mathcal{M}$ represent a random text sample generated by $\mathcal{M}$ in response to a fixed prompt with a fixed length. Define the indicator random variable
$
X_w = \mathds{1}_{s\sim \mathcal{M}}[\, w \text{ appears in } s \,]
$ and the co-occurrence correlation between $w_i$ and $w_j$ in the model’s outputs as the Pearson correlation between the corresponding indicator variables: $
C_{w_i,w_j} \;=\; \mathrm{Corr}(X_{w_i}, X_{w_j})$. Note that for a fixed prompt and model $\mathcal{M}$, the resulting correlation matrix $C$ is a deterministic quantity. Let $\hat{C}$ be the estimated correlation with finite sampling outputs; moreover, $\hat{C}_{w_i,w_j}$ converges to $C_{w_i,w_j}$ if we sample infinite number of times from $\mathcal{M}$.


Let $R_{w_i,w_j}$ denote the corresponding word-pair correlation computed from a reference (non-watermarked) dataset using the same definition. We define the deviation
$A_{w_i,w_j} = C_{w_i,w_j} - R_{w_i,w_j},$
 between the co-occurrence behavior observed in the model’s generated outputs and that of the reference dataset. 
 
 Let $\text{\textbf{sk}} \subset \mathcal{V} \times \mathcal{V}$ denote a secret key consisting of $d$ word pairs sampled uniformly at random. Under the null hypothesis that the model has not been trained on a watermarked dataset, the event $A_{w_i,w_j} \ge \tau$ should occur with approximately the same frequency for word pairs inside and outside $\mathbf{sk}$.
 We define the detection score 
 \begin{equation}
\label{eq:detection_score}
\begin{aligned}
\mathrm{score}&(\mathcal{M}, \mathbf{sk}, \tau)
=
\frac{1}{d}
\sum_{(w_i, w_j)\in \mathbf{sk}}
\mathds{1}[ A_{w_i,w_j} \ge \tau]
 -
\frac{1}{T(T-1)-d}
\sum_{(w_i, w_j)\notin \mathbf{sk}}
\mathds{1}[ A_{w_i,w_j} \ge \tau].
\end{aligned}
\end{equation}

Finally, we declare the watermark by thresholding the detection score: $
\mathrm{Detect}(\mathcal{M}, \mathbf{sk}, \tau, t)
=
\mathds{1}\!\left[
\mathrm{score}(\mathcal{M}, \mathbf{sk}, \tau) \ge t
\right].
$

\paragraph{False positive rate.}
The false positive rate of the detector is defined as the probability that it outputs a positive decision under the null hypothesis— when the target model $\mathcal{M}$ has \emph{not} been trained on a watermarked dataset. For fixed thresholds $\tau$ and $t$, the false positive rate is
$
\mathbb{P}_{\mathbf{sk}}\!\left(\mathrm{Detect}(\mathcal{M}, \mathbf{sk}, \tau, t)=1\right)
$.
In the following theorem, we characterize the false positive behavior of our detection method by providing an upper bound on this probability.

\begin{theorem}[False Positive Rate] \label{theorem: false positive rate}
For any fixed LLM $\mathcal{M}$, $\forall t, \tau \in \mathbb{R}$,
    $$\mathbb{P}_{\text{\textbf{sk}}}(Detect(\mathcal{M}, \text{\textbf{sk}}, \tau, t) = 1) \leq e^{-2t'^2d}$$
where $t' = t \cdot (1-\frac{d}{T \cdot(T-1)}), d = |\text{\textbf{sk}}|, T = |\mathcal{V}|$.
\end{theorem}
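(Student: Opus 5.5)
The plan is to turn the score into a statement about sampling without replacement from a fixed finite population, and then apply Hoeffding's inequality. Under the null, $\mathcal{M}$ and the reference dataset do not depend on $\mathbf{sk}$, so the matrix $A$ is deterministic. Hence, for fixed $\tau$, so is the binary label $b_{(w_i,w_j)} = \mathds{1}[A_{w_i,w_j}\ge \tau]$ on each of the $N = T(T-1)$ ordered pairs. Let $p = \frac{1}{N}\sum_{\text{all pairs}} b$ be the population mean, and let $\hat p = \frac{1}{d}\sum_{\mathbf{sk}} b$ be the mean over the key. The only randomness is $\mathbf{sk}$, drawn uniformly without replacement. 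Thus $\hat p$ is the mean of $d$ draws without replacement from $N$ values in $[0,1]$, and $\mathbb{E}\hat p = p$.

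Next I rewrite the score in terms of $\hat p$. The sum outside the key is $Np - d\hat p$, so
\begin{equation*}
\mathrm{score} = \hat p - \frac{Np - d\hat p}{N-d} = \hat p\Bigl(1+\frac{d}{N-d}\Bigr) - \frac{Np}{N-d} = \frac{N}{N-d}\,(\hat p - p).
\end{equation*}
The event $\mathrm{score}\ge t$ is therefore exactly the event $\hat p - p \ge t\,\frac{N-d}{N} = t\,(1-\frac{d}{T(T-1)}) = t'$.

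Finally, I apply Hoeffding's inequality for sampling without replacement (Hoeffding 1963, Theorem 4). This is the step to cite carefully: Hoeffding's bound for i.i.d. draws carries over to draws without replacement, because the latter are dominated in convex order. For $t' \ge 0$ it gives $\mathbb{P}(\hat p - p \ge t') \le e^{-2t'^2 d}$, which is the claim.

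The main obstacle is the assumption that $A$ is fixed and independent of $\mathbf{sk}$. This uses the population matrix $C$ as defined, not the estimate $\hat C$. If one insists on $\hat C$, I would condition on the sampled outputs, which are independent of the key under the null, and the same argument goes through.

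The second obstacle is the sign of $t$. For $t<0$ the bound $e^{-2t'^2d}$ can be less than $1$ while the probability is large, so the statement only holds as written for $t\ge 0$. I would state that restriction explicitly and note that the case $t\le 0$ is vacuous in practice.
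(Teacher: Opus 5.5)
Your proof is correct and takes essentially the same route as the paper: fix $A$, reduce the detection event to $\hat p - p \ge t'$ for a hypergeometric count, and apply Hoeffding's inequality for sampling without replacement. Your identity $\mathrm{score} = \frac{N}{N-d}(\hat p - p)$ is the paper's rearrangement, and your caveat that the bound needs $t \ge 0$ is a genuine correction to the statement as written, which the paper's proof silently assumes when invoking Hoeffding (for $t<0$, a constant labeling makes the score identically $0$, so detection occurs with probability one while $e^{-2t'^2 d}<1$).
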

\begin{proof}[Proof sketch]
Fix the model $\mathcal{M}$ and threshold $\tau$. Under the null hypothesis, all deviation values $A_{w_i,w_j}$ are fixed, and the only randomness arises from the secret key $\mathbf{sk}$, which is sampled uniformly at random. Let $X$ denote the number of word pairs in $\mathbf{sk}$ whose deviation exceeds $\tau$. Since $\mathbf{sk}$ is sampled without replacement, $X$ follows a hypergeometric distribution with mean $dp$, where $p$ is the fraction of all word pairs satisfying $A_{w_i,w_j} \ge \tau$.

The detection event $\mathrm{Detect}(\mathcal{M}, \mathbf{sk}, \tau, t)=1$ is equivalent to $X > d(p+t')$ for $t' = t(1 - d/(T(T-1)))$. Applying Hoeffding’s inequality for sampling without replacement yields the stated bound. Full details are provided in Appendix \ref{app: proof}.
\end{proof}
For example, if $d = 500$ and $|\mathcal{V}| = 1000$ we want to set the false positive rate to $1e^{-3}$, i.e. $e^{-2t'^2d} = 1e^{-3}$, then $t \approx 0.0832$

\textbf{Remark.} We highlight that the false positive guarantee in Theorem \ref{theorem: false positive rate} is distribution-free and model-agnostic. It does not assume architecture, training procedure, or any structural properties of the language model’s output distribution, and holds for any fixed $\mathcal{M}$.

\textbf{P-value} Theorem \ref{theorem: false positive rate} provides an explicit upper bound on the FPR of the detector under the null hypothesis. As a result, this bound naturally induces a valid p-value for statistical testing. We define our p-value as $p(t) = e^{-2t'^2 d}$,
with $t' = t \cdot (1-\frac{d}{T \cdot(T-1)}), d = |\text{\textbf{sk}}|, T = |\mathcal{V}|$.

\section{Experiment}
In this section, we conduct extensive experiments to address the following questions:
\begin{itemize} [leftmargin=*, itemsep=1pt, topsep=0pt]
    \item How effective is our watermarking scheme in terms of detectability? 
    \item How well does our watermarking method preserve the original dataset?
    \item How do watermarking hyperparameters affect detectability and semantic integrity? 
\end{itemize}

\subsection{Experimental Setup}
\textbf{Datasets:} We conduct evaluation using three commonly used LLM benchmark datasets: MMLU \citep{hendrycks2021measuringmassivemultitasklanguage}, ARC-easy and ARC-challenge \citep{clark2018thinksolvedquestionanswering} From each dataset, we randomly sample 1,000 examples across all disciplines. 


\textbf{Models:} We use two open base models for experimentation: LLaMA-3-8B-Instruct \citep{grattafiori2024llama} and Gemma-2-2B-Instruct \citep{team2024gemma}. We fine-tune models using LoRA \citep{hu2021loralowrankadaptationlarge} with learning rate $1e^{-4}$, rank 64, LoRA alpha 128, batch size 32, dropout rate of 0.05 for 3 epochs. 
Although our detection algorithm requires only API access at test time, our evaluation is conducted on open-weight models, as closed models do not permit fine-tuning. 

\textbf{Our method.} We randomly sample $250$ word pairs as the secret key and rephrase the dataset for three rounds using ChatGPT-5 with minimal effort. The prompt is provided in Appendix~\ref{app:prompt}. For detection, we query the model with the prompt \emph{“Please generate a QA question.”} and collect $20\mathrm{K}$ generated samples to compute the detection score. We set $\tau = 0.03$.

\textbf{Baseline methods.} We compare our method with three recent dataset watermarking approaches: (1) \textsc{STAMP} \citep{rastogi2025stampcontentprovingdataset}, which generates multiple rephrased versions of a dataset and detects watermark via a paired t-test on perplexity; and (2) \textsc{Radioactive} \citep{sander2024watermarkingmakeslanguagemodels, sander2025detectingbenchmarkcontaminationwatermarking}, which embeds signals via token-level watermarking and detects them either from next-token predictions (open-model setting) or generated outputs (closed-model setting). For consistency, we use the same generation prompt as in our method. Additional implementation details are provided in Appendix~\ref{appendix: baseline}.

\textbf{Setup:} We evaluate our method under three settings: (1) \textbf{Vanilla}: base models are fine-tuned on watermarked benchmark datasets. (2) \textbf{Partial contamination (data mixture)}: we mix the watermarked benchmark data with 20K and 50K paper abstracts collected from OpenReview to simulate realistic post-training pipelines with low contamination ratios, where the watermarked tokens constitute approximately 1.1\%--1.3\% and 0.5\% of the total fine-tuning tokens, respectively (see Appendix~\ref{cont ratio} for details). This setting simulates realistic post-training pipelines where curated datasets constitute only a small fraction of the total training corpus. (3) \textbf{Text modification:} we apply lightweight transformation to the watermarked data, including random word deletion, synonym substitution, and emoji insertion, with modification ratios $r \in \{0.2, 0.3\}$. These transformations serve as stress tests for the robustness of the watermark signal under typical data processing or light editing, without fundamentally altering the semantic content of the dataset.

\subsection{Main Results}

We first evaluate our method and baselines in the vanilla setting; results are reported in Table~\ref{tab:detectability}. Our method consistently achieves low p-values across all models and datasets, with statistically significant detection ($p < 0.01$) even for smaller models Gemma-2-2B. 
Under closed-model access, our method substantially outperforms prior approaches: the Radioactive method fails to produce meaningful signals (p-values $0.2$--$0.3$), indicating weak detectability without access to model internals. Under open-model access, our method achieves detection performance that is comparable to STAMP and Radioactive, despite not requiring access to logits or next-token probabilities.


\begin{table*}[!t]

\centering
\small
\begin{tabular}{llccc}

\toprule
\textbf{Model} &  \textbf{Method} & \textbf{MMLU} & \textbf{ARC-C} & \textbf{ARC-E} \\
\midrule
\multirow{4}{*}{LLaMA-3-8B}
  
 & STAMP (open)       & 0.044 & \textbf{3.95e-3} & \textbf{6.56e-3} \\
                         
 & Radio(open) & \textbf{2.84e-9} & \textbf{1.46e-8} & \textbf{5.20e-9} \\
 
 &   Radio(closed)     & 0.316 & 0.217 & 0.216 \\
                         
 & Ours       & \textbf{8.40e-5}  & \textbf{1.34e-6} & \textbf{4.11e-6} \\
\midrule

\multirow{4}{*}{Gemma-2-2B}
   
 & STAMP(open)       &  0.047 & \textbf{1.8e-3} & \textbf{2.43e-3} \\
                        
 & Radio(open) \footnotemark[1] & N/A & N/A & N/A  \\
 
 &  Radio(closed) \footnotemark[1] & N/A & N/A & N/A \\
                        
 & Ours      & \textbf{4.25e-3}  & \textbf{1.12e-06}  &  \textbf{4.81e-5} \\
\bottomrule
\end{tabular}
\caption{\textbf{P-value of watermarking detection algorithms in the vanilla setting ($\downarrow$ lower is better)} We bold numbers showing significant signals ($<0.01$). Radio(open) and Radio(closed) denote the radioactive watermark evaluated under open- and closed-model access, respectively. Our detection algorithm consistently achieves low p-values across three benchmark datasets and two base model architectures.}
\label{tab:detectability}
\vspace{-8pt}
\end{table*}

\footnotetext[1]{
Radioactive assumes aligned tokenization between the rephrasing model and the target model.
This assumption holds for LLaMA3-8B and LLaMA3.2-3B, but not for
Gemma-2-2B; so we mark Radioactive as not applicable (N/A) for Gemma.
}

\begin{table}[t]
\centering
\small
\setlength{\tabcolsep}{4pt}

\begin{tabular}{llccc}
\toprule
\textbf{Model} & \textbf{Dataset} & \textbf{Van.} & \multicolumn{2}{c}{\textbf{Data Mixture}} \\
\cmidrule(lr){4-5}
 &  &  & \textbf{1.3\%} & \textbf{0.5\%} \\
\midrule
\multirow{3}{*}{LLaMA}
& MMLU  & \textbf{4.80e-6} & \textbf{6.16e-3} & \textbf{3.46e-2} \\
& ARC-C & \textbf{1.33e-6} & \textbf{1.07e-3} & \textbf{2.08e-3} \\
& ARC-E & \textbf{1.44e-6} & \textbf{1.77e-3} & \textbf{4.48e-3} \\
\midrule
\multirow{3}{*}{Gemma}
& MMLU  & \textbf{8.40e-5} & \textbf{2.66e-2} & 7.98e-1 \\
& ARC-C & \textbf{2.72e-5} & \textbf{1.02e-3} & 1.26e-1 \\
& ARC-E & \textbf{2.17e-6} & \textbf{1.77e-2} & 9.45e-2 \\
\bottomrule
\end{tabular}

\caption{
\textbf{Detectability under partial contamination at different benchmark mixing ratios ($\downarrow$).}
We report p-values for Vanilla and two data mixture settings, where the watermarked data is $<$1.3\% and $<$0.5\% of total tokens. We bold values with significant signals (<0.05).
}
\label{tab:partial-contamination}
\end{table}

\begin{figure}[t]
\centering

\begin{minipage}[t]{0.3\linewidth}
    \centering
    \includegraphics[width=\linewidth]{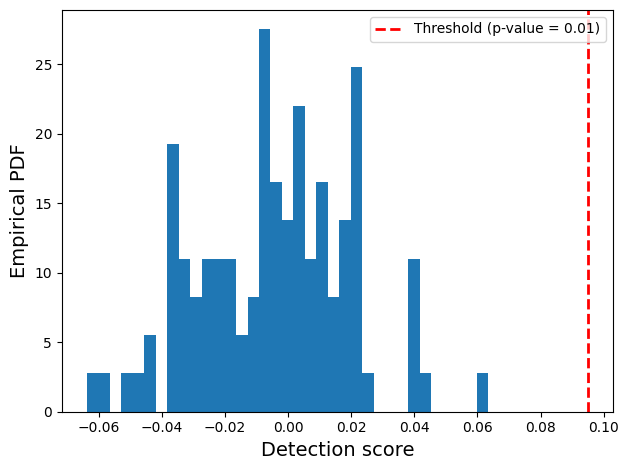}
\end{minipage}
\hspace{0.05\linewidth}
\begin{minipage}[t]{0.3\linewidth}
    \centering
    \includegraphics[width=\linewidth]{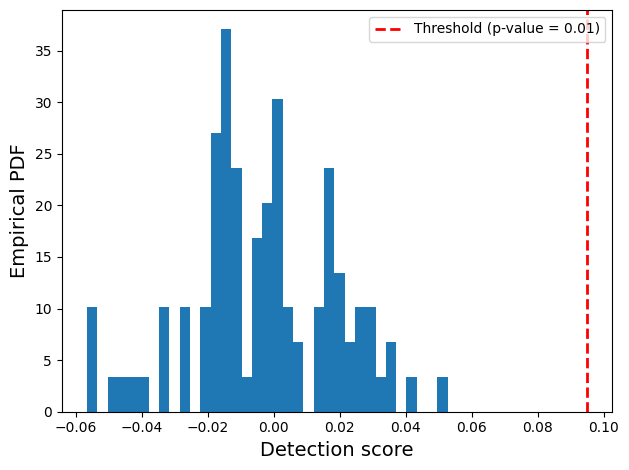}
\end{minipage}

\caption{
\textbf{Empirical false positive behavior of our detector.}
We apply our detector to text generated by two base models. Each figure shows the empirical PDF of detection scores under the null hypothesis. The dashed line marks the detection threshold corresponding to $p=0.01$; no false positives are observed in either setting.
}
\label{fig:false_positive_rate}
\end{figure}

\textbf{Data Mixture.} Table~\ref{tab:partial-contamination} reports the detectability of our watermarking method under the partial contamination (data mixture) setting with two mixing ratios. For each ratio, we repeat the experiment three times to account for training randomness. Overall, detectability degrades as the contamination ratio decreases, as reflected by the increasing p-values when the contamination ratio becomes smaller. Nevertheless, our method remains statistically detectable at contamination levels (1.3\%) for both base models across all datasets. At a more extreme contamination level (0.5\%), we observe a divergence between models: LLaMA-3-8B continues to yield statistically significant detection, while Gemma-2-2B fails to detect. This may be because larger models are better able to retain the injected co-occurrence signals, whereas smaller models may lack sufficient capacity to preserve these weaker statistical patterns under heavy data mixing.



\textbf{False positive rate.} We verify that our detector does not falsely indicate the presence of training data. While Theorem~\ref{theorem: false positive rate} provides a theoretical upper bound under the null hypothesis, we also evaluate the empirical false positive behavior. We run the detector with 100 randomly sampled secret keys on text generated by two base models and plot the distribution of detection scores in Figure~\ref{fig:false_positive_rate}). Empirically, we observe zero false positives in both cases, providing strong evidence of reliable detection. Notably, here is a gap between the empirical false positive rate and
the theoretical bound. This may be because our p-value is an upper bound on the false positive rate, which may be loose in practice.

\begin{table*}[!t]
\centering
\small
\begin{tabular}{l l c c c c}
\toprule
\textbf{Dataset}
& \textbf{Method}
& \textbf{Van.}
& \textbf{Del.}
& \textbf{Syn.}
& \textbf{Emo.} \\
\midrule

\multirow{4}{*}{\textbf{MMLU} ($r=0.2$)}
& STAMP
& 4.40e-2
& 0.138 
& 9.27e-2 
& 7.50e-2   \\

& Radio (open)
& \textbf{7.94e-9}
& \textbf{1.58e-2} 
& 4.61e-2 
& 5.39e-2  \\


& \textbf{Ours}
& 4.80e-6
&  \textbf{1.54e-2}
&  \textbf{3.35e-2}
&  \textbf{9.64e-3}\\

\midrule

\multirow{4}{*}{\textbf{ARC-C} ($r=0.2$)}
& STAMP
& 3.95e-3
& 3.25e-2 
& 8.90e-3 
& 3.03e-3 \\

& Radio (open)
& \textbf{1.46e-8}
& 1.60e-2 
& \textbf{2.27e-3} 
& 4.13e-3  \\


& \textbf{Ours}
& 1.33e-6
&  \textbf{4.72e-3}
&  \textbf{2.41e-3}
&  \textbf{1.54e-3} \\

\midrule

\multirow{4}{*}{\textbf{ARC-Easy} ($r=0.3$)}
& STAMP
& 3.95e-3
&  5.24e-2
&  4.25e-2
& 1.85e-2 \\

& Radio (open)
& \textbf{5.20e-9}
& 5.87e-2 
&  \textbf{8.08e-4}
&  1.88e-2 \\


& \textbf{Ours}
& 1.44e-6
&  \textbf{6.80e-3}
&  2.02e-3
&   \textbf{1.54e-3}\\

\bottomrule
\end{tabular}
\caption{\textbf{Robustness of watermark detection under different text modifications ($\downarrow$).}
Van. denotes vanilla setting, i.e. unmodified watermarked text; Del., Syn., and Emo. denote random deletion, synonym substitution, and emoji insertion, respectively. We bold the best-performing method in each setting. Our method
is more robust to text perturbations, and maintains consistently lower p-values than other baselines across all perturbation types.}
\label{tab:robustness}
\vspace{-12pt}
\end{table*}

\textbf{Robustness to text modifications.}
To evaluate the robustness of the proposed watermarking scheme, we apply three types of text perturbations to the watermarked benchmarks: random word removal, synonym substitution, and emoji insertion, each with modification ratios \(r \in \{0.2, 0.3\}\). For each setting, we sample five random seeds to generate modified texts and report the mean p-value across seeds in Table~\ref{tab:robustness}.
Compared to prior methods, our approach is more robust to text perturbations. While we observe a slight degradation in detectability under modifications, the p-values remain low across all datasets and perturbation types. In contrast, the Radioactive method degrades substantially under text edits. For example, on MMLU, its p-value increases from \(7.94\times10^{-9}\) in the vanilla setting to \(>0.01\) after 20\% modification across all perturbations. This may be because methods relying on next-token–level signals are more sensitive to text modifications, whereas our method replies on dataset-level statistics, which is more resilient to localized word-level modifications.

\subsection{Watermarking and Dataset Quality}
In this section, we conduct extensive studies on how well our watermarking method preserves the original dataset.
We measure the watermarking performance using three metrics: (1) \textbf{Dataset utility. } Since benchmarks are primarily used to evaluate the performance of LLM, we follow \citep{rastogi2025stampcontentprovingdataset} and measure whether watermarking affects their role as evaluation datasets.  We use the lm-evaluation-harness framework \citep{eval-harness} to assess 5 pre-trained LLMs across various model architecture and size on the original and watermarked benchmarks, and compare the model performance. (2) \textbf{Semantic similarity. } We use sentence-BERT score \citep{zhang2020bertscoreevaluatingtextgeneration} to measure the semantic similarity between the original and watermarked dataset, measuring how well the meaning of each example is preserved after watermarking. (3) \textbf{Lexical preservation. } We use n-gram metrics to measure how much of the original text is preserved in the edited sentence.





\begin{table*} 
\centering
\begin{tabular}{llccccc}
\toprule
\small
\textbf{Dataset} & \textbf{Variant} &
\makecell{\textbf{Pythia}\\\textbf{1B}} &
\makecell{\textbf{Gemma-2}\\\textbf{2B}} &
\makecell{\textbf{LLaMA-3.2}\\\textbf{3B}} &
\makecell{\textbf{Qwen-2.5}\\\textbf{7B}} &
\makecell{\textbf{LLaMA-3}\\\textbf{8B}} \\
\midrule

\multirow{3}{*}{MMLU}
 & Original        & 0.255 & 0.523 & 0.428 & 0.708 & 0.561 \\
 & KGW             & 0.246 & 0.515 & 0.452 & 0.669 & 0.554 \\
 & \textbf{Ours}   & 0.256 & 0.510 & 0.440 & 0.672 & 0.552 \\
\midrule

\multirow{3}{*}{ARC-Easy}
 & Original        & 0.254 & 0.867 & 0.730 & 0.964 & 0.908 \\
 & KGW             & 0.256 & 0.858 & 0.755  & 0.9525 & 0.893  \\
 & \textbf{Ours}   & 0.250 & 0.846 & 0.738 & 0.939 & 0.891 \\
\midrule

\multirow{3}{*}{ARC-C}
 & Original        & 0.236 & 0.700 & 0.543 & 0.889 & 0.764 \\
 & KGW             & 0.230 & 0.681 & 0.544 & 0.850 & 0.739 \\
 & \textbf{Ours}   & 0.232 & 0.695 & 0.545 & 0.850 & 0.746 \\
\bottomrule
\end{tabular}
\caption{\textbf{Zero-shot evaluation of base models on the original and watermarked datasets.}
We measure the utility of the dataset by comparing base model performance on original and watermarked datasets. Evaluation on datasets watermarked by our method yields performance comparable to that on the original datasets and KGW-based watermarking. Importantly, the relative ordering of models are preserved across all settings, indicating that our watermarking preserves dataset utility for benchmark evaluation.}\label{tab: zero-shot}
\end{table*}

\begin{table}[t]
\centering
\small
\begin{tabular}{llccc}
\toprule
\textbf{Bench} & \textbf{Method} & \textbf{SBERT} & \textbf{3-gram} & \textbf{4-gram} \\
\midrule
\multirow{2}{*}{MMLU}
 & Our & 0.897  & 0.643  & 0.557 \\
 & KGW & 0.848 & 0.193 & 0.121  \\
\midrule
\multirow{2}{*}{ARC-C}
 & Our & 0.859  & 0.538  & 0.448  \\
 & KGW & 0.825 & 0.116 & 0.064 \\
\midrule
\multirow{2}{*}{ARC-E}
 & Our & 0.864  & 0.522 & 0.429  \\
 & KGW & 0.821  & 0.113 & 0.060  \\
\bottomrule
\end{tabular}
\caption{Comparison of semantic similarity (SBERT) and lexical overlap (3-gram, 4-gram) between original and rephrased datasets from our method and KGW-based watermarking.}\label{tab:BERT ngram}
\vspace{-10pt}
\end{table}
\textbf{Result. } Table~\ref{tab: zero-shot} reports evaluation results for a range of base models of varying sizes, evaluated with the lm-evaluation-harness \citep{eval-harness} on both original and watermarked datasets. Our method achieves performance comparable to both the original data and KGW-watermarked data across all models and benchmarks. While small numerical differences exist, we highlight that they do not affect the relative ranking of models within each dataset, indicating that our watermark preserves dataset utility for benchmark evaluation. Table~\ref{tab:BERT ngram} compares our rephrasing method with KGW-based rephrasing in terms of semantic similarity (SBERT) and lexical overlap (3-gram and 4-gram) across three benchmarks. Both methods maintain high semantic similarity, with comparable SBERT scores. However, our method consistently achieves substantially higher n-gram overlap. This is due to the fact that our method performs localized edits, whereas KGW often rewrites entire sentences, reducing lexical overlap while preserving semantics.




\subsection{Ablation study} \label{subsec: ablation}

 We study the impact of hyperparameters on detectability and semantic integrity. Figure~\ref{fig:ablation} reports results on LLaMA-3-8B with the ARC-Easy dataset; additional results are provided in Appendix~\ref{appendix: experiments}. We analyze two factors in the embedding stage—the size of the word-pair list and the number of rephrasing rounds—and one in the detection stage—the threshold $\tau$. 

\textbf{(1) Size of word-pair list.} As shown in Figure~\ref{fig:ablation}(a), we observe a U-shaped trade-off between semantic similarity and detectability. Increasing the key size initially improves detectability, which is expected since the p-value decreases with larger key size for a fixed detection score $t$. However, beyond a moderate size, further increasing the number of word pairs degrades detectability. This is likely because the rephrasing model can not consistently enforce a large number of co-occurrence constraints without distorting the original semantics. \textbf{(2) Number of rephrasing rounds.} As shown in Figure~\ref{fig:ablation}(b), we observe a clear trade-off between semantic similarity and detectability. Increasing the number of rephrasing rounds enables stronger signal embedding, leading to improved detectability, but at the cost of reduced semantic similarity.
\textbf{(3) Choice of $\tau$}. Recall from Equation~\ref{eq:detection_score} that $\tau$ controls the strength threshold for counting word pairs. A large $\tau$ retains only very strong signals; if set too high, most pairs fail to meet the threshold. Conversely, a small $\tau$ includes many word pairs, introducing noise and reducing statistical separation. We observe that the optimal range of $\tau$ is approximately $0.02$--$0.04$, where many secret key pairs exceed the threshold while relatively few non-selected pairs do, yielding the strongest detectability.

\begin{figure}[!t]
\centering

\begin{subfigure}{0.28\textwidth}
    \centering
    \includegraphics[width=\linewidth]{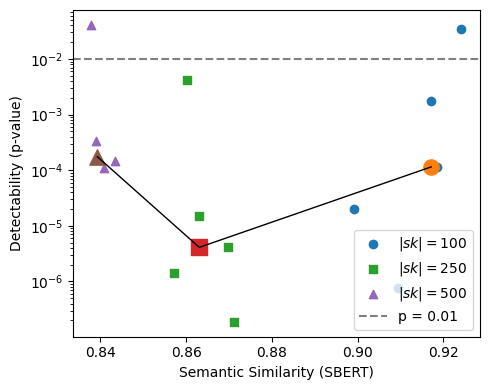}
    \caption{size of word-pair list}
\end{subfigure}
\hfill
\begin{subfigure}{0.28\textwidth}
    \centering
    \includegraphics[width=\linewidth]{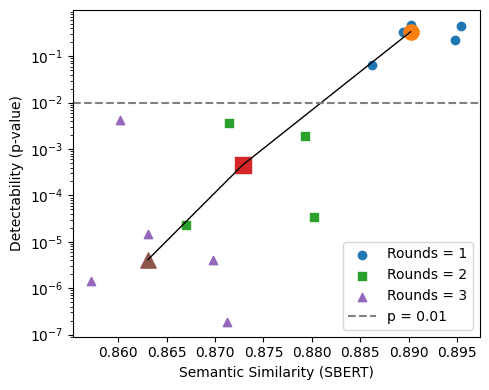}
    \caption{number of rounds}
\end{subfigure}
\hfill
\begin{subfigure}{0.28\textwidth}
    \centering
    \includegraphics[width=\linewidth]{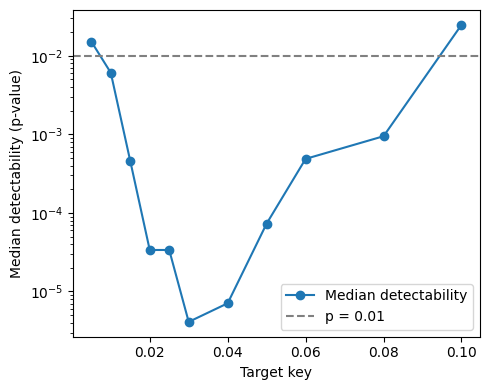}
    \caption{choice of $\tau$}
\end{subfigure}

\caption{Effect of hyperparameter on detectability. In (a) and (b), Each small marker represents one secret key, and larger markers denote the median across keys.}
\label{fig:ablation}
\vspace{-8pt}
\end{figure}

\section{Limitation}
In this section, we discuss several limitations of our work. Our method applies only to datasets that have not yet been released, since it requires rephrasing the data to embed watermark signals. As a result, it cannot be applied retroactively to datasets that were previously released without watermarking. We believe that this limitation is shared by all existing dataset watermarking approaches, which require modifying the data in advance to embed detectable statistical signals.

 Our work is intended as a tool for auditing dataset usage, rather than a mechanism for preventing or defending against in-distribution contamination or adversarial data inclusion. As a result, our method does not aim to provide robustness against fully adversarial transformations of the dataset. Prior work has shown that strong watermarking guarantees are impossible under arbitrary rewriting of the data (Zhang et al., 2025), as such transformations can eliminate any detectable statistical signal.

Due to computational constraints, we focus our evaluation on detecting contamination in the post-training (fine-tuning) stage, which has been identified as an important source of contamination \citep{chen2025benchmarking, mundler2025k2think}. Evaluating the effectiveness of our approach at pre-training scale remains an open problem and is left for future work.





\section{Conclusion}

In this work, we introduce a dataset watermarking framework for detecting whether a dataset has been used in training LLMs under black-box access, supported by a provable guarantee on the false positive rate. Through extensive experiments on two base models and three benchmark datasets, we show that our method reliably detects data contamination in the fine-tuning stage. We further evaluate our approach under partial contamination and lightweight text modifications, demonstrating that dataset-level watermark signals can persist under these conditions. Lastly, we show that our watermarking procedure preserves benchmark utility for model evaluation while introducing less semantic and lexical distortion than existing baseline methods.




\bibliography{reference}
\bibliographystyle{colm2026_conference}

\appendix

\section*{Appendix Overview}

This appendix provides additional details, experimental results, and analyses that complement the main text.

\begin{itemize}[leftmargin=*]
    \item \textbf{Appendix~\ref{app:prompt}: } We provide the prompts used for dataset rephrasing, along with illustrative examples of watermarked samples.
    
    \item \textbf{Appendix~\ref{appendix: setup}: } We describe baseline details and data mixture ratios for partial contamination experiments.
    
    \item \textbf{Appendix~\ref{appendix: experiments}} We present extended ablation studies across multiple datasets, including the effects of the number of rephrasing rounds, the size of the secret key, and the choice of threshold $\tau$.
    
    \item \textbf{Appendix~\ref{app: proof}} We provide the full proof of Theorem~4.1.
    
\end{itemize}

\section{Rephrasing Prompt Template}
\label{app:prompt}

\noindent We use the following prompt template for controlled rephrasing with a word addition/removal objective. Placeholders \texttt{\{text\}}, \texttt{\{adding\_list\}}, and \texttt{\{removing\_list\}} are filled in at runtime.

\begin{promptbox}
\begin{lstlisting}[basicstyle=\ttfamily\small,breaklines=true]
You are a text processing and word integration assistant. Given an input text and two lists of words ("Words to Add" and "Words to Remove"), your goal is to produce a rephrased version of the text that:
 (1) Naturally and coherently incorporates as many words or (word pairs) as possible from the "Words to Add" list. If a word pair [word A, word B] is in "Words to Add", you must add both word A and word B to the text simultaneously.
 Note that these words tend to either have synonyms in the text or co-occur strongly with some words in the text. Hence, you may replace a word with its synonym.
 (2) Removes as many words as possible from the "Words to Remove" list by replacing words with a suitable synonym or rephrasing the text.
 (3) Preserves the original meaning and context of the input text. Make sure the text is grammatically correct, fluent, and does not sound unnatural.

ADDITIONAL RULES:
 * If the input text is a question, it must remain a question.
 * If the original text is incomplete, it is important that it remains incomplete; do not autocomplete the text.
 * Do not alter blank spaces, such as "_". These are placeholders and must remain in the output.
 * If the original text contains a quote, preserve the quotation marks and the integrity of the quoted content.
 * If there are specific formatting elements like citations (e.g., "(5,2,10)"), these must also remain unaltered.
 * If the original text contains a mathematical equation or code, do not modify them.

Other notes:
 * Do not modify the punctuation, for example, change " to "'s".
 * Use double quotes for both keys and string values.

If it is possible to successfully integrate at least one word or successfully remove at least one word, return a JSON object with the following structure:
{ "success": 1, "modified_text": "your modified text", "words added": [..], "words removed": [..] },
If it is impossible to add or remove the words naturally without making significant changes to the text's meaning or structure, return a JSON object:
{ "success": 0 }

If some words within the "Words to Add" list already exist in the text, just make sure not to remove the word from the text and do not add the word to the words added list.
Similarly, if some words in the "Words to Remove" list are not in the text, just make sure not to remove the word from the text and do not add the word to the words removed list.

Original text: {text}
Words to add: {adding_list}
Words to remove: [{", ".join(removing_list)}]

Please provide only the JSON output.
\end{lstlisting}
\end{promptbox}

\subsection{Examples of Benchmark Rephrasing}
\label{app:examples}

\noindent We provide qualitative examples illustrating how our rephrasing prompt integrates target word pairs while preserving the original semantics of benchmark questions.


\begin{promptbox}
\textbf{Original:} Which of the following substances is found in greater quantity in exhaled air? \\
\textbf{Words to add:} \texttt{[[magnitude, volume]]} \\
\textbf{Words to remove:} \texttt{[found, space]}

\vspace{0.4em}
\textbf{Rephrased question:} \\
Which of the following substances is present in greater magnitude or volume in exhaled air?

\vspace{0.4em}
\begin{lstlisting}[basicstyle=\ttfamily\footnotesize,breaklines=true]
{"words added":["magnitude","volume"],"words removed":["found"]}
\end{lstlisting}
\end{promptbox}

\begin{promptbox}
\textbf{Original:} In which of the following positions does a patient lie face down? \\
\textbf{Words to add:} \texttt{[[second, causes], [surface, causes]]} \\
\textbf{Words to remove} \texttt{[patient, company, face, dna]}

\vspace{0.4em}
\textbf{Rephrased question:} \\
In which of the following positions does an individual lie downward on the surface, which causes the front to be downward?

\vspace{0.4em}
\begin{lstlisting}[basicstyle=\ttfamily\footnotesize,breaklines=true]
{"words added":["surface","causes"],"words removed":["patient","face"]}
\end{lstlisting}
\end{promptbox}

\begin{promptbox}
\textbf{Original:} Which of the following organs removes bilirubin from the blood, manufactures plasma proteins, and is involved with the production of prothrombin and fibrinogen?

\textbf{Words to add:} \texttt{[[along, subsequent], [production, occur], [cell, connected]]} \\
\textbf{Words to remove:} \texttt{[plasma, common]}

\vspace{0.4em}
\textbf{Rephrased question:} \\
Which of the following organs removes bilirubin from the blood, manufactures serum proteins, and is involved with the production of prothrombin and fibrinogen along with subsequent functions?

\vspace{0.4em}
\begin{lstlisting}[basicstyle=\ttfamily\footnotesize,breaklines=true]
{"words added":["along","subsequent","production"],"words removed":["plasma"]}
\end{lstlisting}
\end{promptbox}

\section{Experiment Setup}\label{appendix: setup}

\subsection{Baseline details} \label{appendix: baseline}

\textbf{Baseline methods:} We compare our algorithm with three recently proposed watermarking approaches: (1) \textbf{\textsc{STAMP}} \cite{rastogi2025stampcontentprovingdataset} uses the KGW watermarking scheme \cite{kirchenbauer2024reliabilitywatermarkslargelanguage} to create multiple rephrased versions of a dataset, of which only one is released to public; in detection, it performs a paired t-test to examine whether the public rephrase exhibits lower perplexity under the target model compared to the average perplexity across multiple private rephrased versions. 

We implement two variations of the \textbf{radioactive} methods \cite{sander2024watermarkingmakeslanguagemodels, sander2025detectingbenchmarkcontaminationwatermarking}. It first rephrases the dataset using KGW watermarking scheme and then detects watermark traces in two settings: (2) In the open model setting, it uses context window from the watermarked dataset and checks whether the target model's next-token predictions falls in the corresponding green list. Then it deduplicate the context window and perform a statistical test. (3) In the closed-model setting, it prompts the target model and runs detection on the text generated by the model. For consistency with our method, we use the same prompt: \emph{“Please generate a QA question.”}

Since all three baselines rely on rephrasing with KGW \cite{kirchenbauer2024reliabilitywatermarkslargelanguage} scheme, we follow \cite{sander2025detectingbenchmarkcontaminationwatermarking} and generate rephrased dataset with LLaMA-3.1-8B-Instruct with top-p sampling with $p=0.7$ and temperature $=0.5$. For watermark, we use KGW scheme, with context window 2, green red ratio $0.5$, and boosting value $\delta = 1$. We use $4$ copies of private rephrase for STAMP method.

\subsection{Partial contamination ratios} \label{cont ratio}

\begin{table}[h!]
\centering
\begin{tabular}{lccc}
\toprule
Dataset & \# Watermarked Tokens & Contamination (20K) & Contamination (50K) \\
\midrule
MMLU      & 64,148 & 1.284\% & 0.548\% \\
ARC-C     & 64,708 & 1.295\% & 0.552\% \\
ARC-Easy  & 57,205 & 1.147\% & 0.489\% \\
\bottomrule
\end{tabular}
\caption{Contamination ratios under the partial contamination (data mixture) setting. We mix the watermarked benchmark dataset with 20K or 50K paper abstracts collected from OpenReview to simulate post-training pipelines. The contamination ratio is measured as the fraction of watermarked tokens over the total number of training tokens (i.e., watermarked + non-watermarked tokens).} 
\end{table}

\section{Additional Experiments} \label{appendix: experiments}

\subsection{Ablation Study} 
In this section, we include additional experiment details for ablation studies. 


\paragraph{Number of rephrasing rounds.}
Figures~\ref{fig:round_tradeoff}(a)--(c) show the trade-off between semantic similarity and detectability across different numbers of rephrasing rounds for MMLU, ARC-Challenge, and ARC-Easy. Each point corresponds to a sampled secret key, and larger markers denote the median across runs. We observe a consistent trend across all datasets: increasing the number of rephrasing rounds improves detectability (lower p-values) while reducing semantic similarity. This indicates that additional rephrasing rounds strengthen the embedded watermark signal, at the cost of greater semantic distortion.

\begin{figure}[!htbp]
\centering

\begin{subfigure}{0.3\textwidth}
    \centering
    \includegraphics[width=\linewidth]{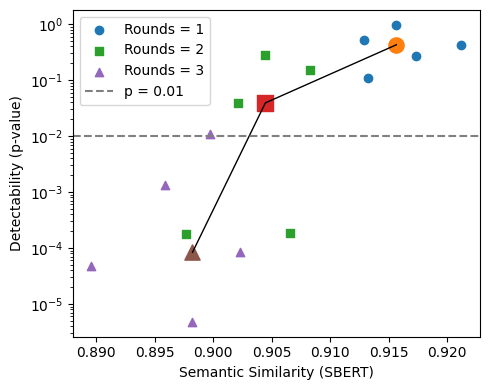}
    \caption{MMLU}
\end{subfigure}
\hfill
\begin{subfigure}{0.3\textwidth}
    \centering
    \includegraphics[width=\linewidth]{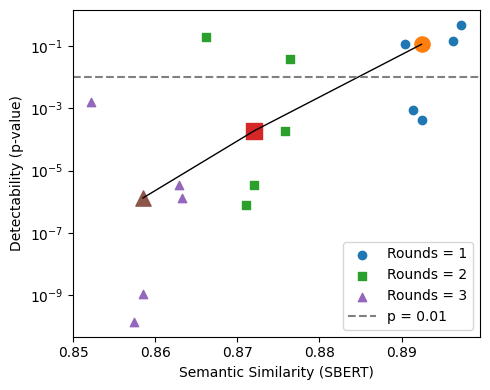}
    \caption{ARC-Challenge}
\end{subfigure}
\hfill
\begin{subfigure}{0.3\textwidth}
    \centering
    \includegraphics[width=\linewidth]{Figures/rounds-arc_easy_new-tau-0.03-sk-250.png}
    \caption{ARC-Easy}
\end{subfigure}

\caption{
Trade-off between detectability and semantic similarity under different numbers of GPT-based rephrasing rounds.
Each point represents one secret key, and larger markers denote the meadian across runs.
Increasing the number of rephrasing rounds strengthens the embedded watermark, resulting in lower detection p-values at the cost of reduced semantic similarity.
}
\label{fig:round_tradeoff}
\end{figure}

\paragraph{Size of secret key.}
Figures~\ref{fig:round_tradeoff-sk-size}(a)--(c) present the effect of varying the size of the word-pair list on detectability and semantic similarity. While a similar trade-off is observed across datasets, the optimal key size varies depending on the benchmark. For example, on MMLU, 100 key sizes achieve the best balance between detectability and semantic preservation, whereas on ARC-Easy, 250 key sizes may still maintain strong detectability with acceptable semantic similarity. This suggests that the optimal number of word pairs depends on dataset characteristics such as length and linguistic variability.

\begin{figure}[!htbp]
\centering

\begin{subfigure}{0.3\textwidth}
    \centering
    \includegraphics[width=\linewidth]{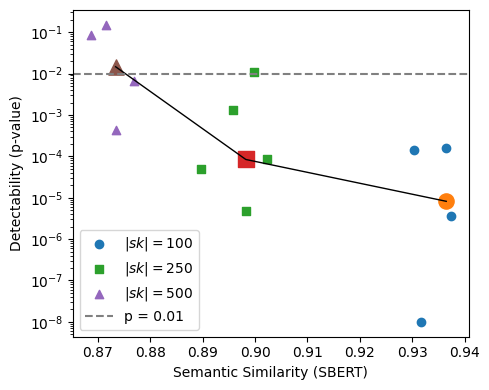}
    \caption{MMLU}
\end{subfigure}
\hfill
\begin{subfigure}{0.3\textwidth}
    \centering
    \includegraphics[width=\linewidth]{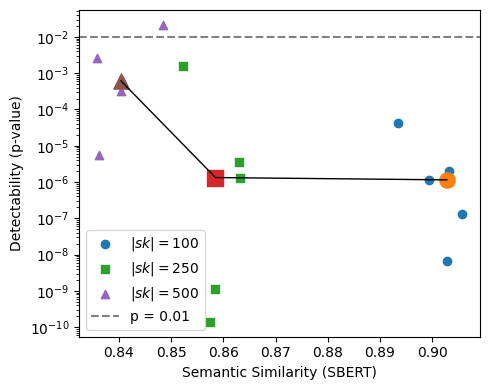}
    \caption{ARC-Challenge}
\end{subfigure}
\hfill
\begin{subfigure}{0.3\textwidth}
    \centering
    \includegraphics[width=\linewidth]{Figures/sk-arc_easy_new-tau-0.03.png}
    \caption{ARC-Easy}
\end{subfigure}

\caption{
 Trade-off between detectability and semantic similarity under different numbers of secret keys.
Each point represents one secret key, and larger markers denote the meadian across runs.
}
\label{fig:round_tradeoff-sk-size}
\end{figure}


\begin{figure}[!htbp]
\centering

\begin{subfigure}{0.26\textwidth}
    \centering
    \includegraphics[width=\linewidth]{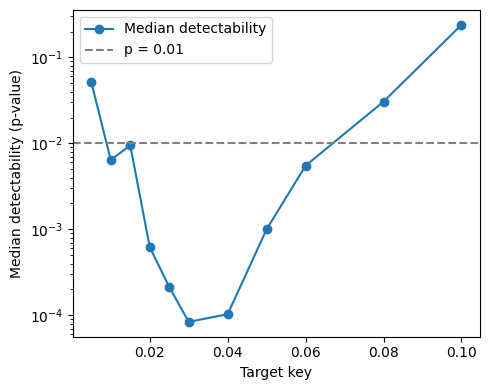}
    \caption{MMLU}
\end{subfigure}
\hfill
\begin{subfigure}{0.26\textwidth}
    \centering
    \includegraphics[width=\linewidth]{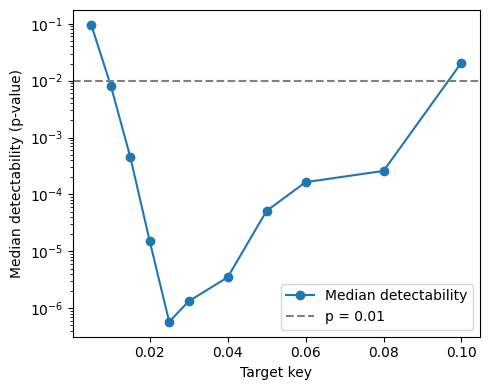}
    \caption{ARC-Challenge}
\end{subfigure}
\hfill
\begin{subfigure}{0.26\textwidth}
    \centering
    \includegraphics[width=\linewidth]{Figures/tau-arc_easy_new-250.png}
    \caption{ARC-Easy}
\end{subfigure}

\caption{
 Effect of $\tau$
}
\label{fig:round_tradeoff-sk-size}
\end{figure}
\paragraph{Effect of $\tau$.}
Figure~\ref{fig:round_tradeoff-sk-size} illustrates the effect of the threshold $\tau$ on detectability across all datasets. We observe a consistent trend:  Smaller values introduce noise by including many weak correlations, while larger values are overly restrictive and exclude informative word pairs. The optimal range of $\tau$ is approximately $0.02$--$0.04$, where many secret key pairs exceed the threshold while relatively few non-selected pairs do, yielding the strongest detectability.
\section{Proof of Theorem \ref{theorem: false positive rate}} \label{app: proof}
\begin{theorem}[False Positive Rate] For any fixed LLM $\mathcal{M}$, $\forall t, \tau \in \mathbb{R}$,
    $$\mathbb{P}_{\text{\textbf{sk}}}(Detect(\mathcal{M}, \text{\textbf{sk}}, \tau, t) = 1) \leq e^{-2t'^2d}$$
where $t' = t \cdot (1-\frac{d}{T \cdot(T-1)}), d = |\text{\textbf{sk}}|, T = |\mathcal{V}|$.
\end{theorem}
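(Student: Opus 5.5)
We fix the model $\mathcal{M}$ and the thresholds $\tau, t$. The plan is to treat all deviations $A_{w_i,w_j}$ as deterministic, because $C$ is a fixed population quantity for a fixed model and prompt, and $R$ is computed from a fixed reference dataset. Under the null hypothesis the secret key $\mathbf{sk}$ is drawn independently of $\mathcal{M}$, so the only randomness comes from drawing $d$ pairs uniformly without replacement from the $N := T(T-1)$ ordered pairs.

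Let $K$ be the number of pairs with $A_{w_i,w_j}\ge\tau$, and set $p = K/N$. Let $X$ be the number of pairs in $\mathbf{sk}$ with $A\ge\tau$. Then $X$ is hypergeometric with mean $dp$. The number of unselected pairs exceeding $\tau$ is $K - X$, so
$$\mathrm{score} = \frac{X}{d} - \frac{K-X}{N-d}.$$

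The next step is purely algebraic. Write $X/d = p + \delta$. Then $K - X = Np - d(p+\delta) = (N-d)p - d\delta$, so the second term equals $p - \frac{d\delta}{N-d}$. Hence
$$\mathrm{score} = \delta\Bigl(1 + \frac{d}{N-d}\Bigr) = \delta\,\frac{N}{N-d}.$$
The detection event $\mathrm{score}\ge t$ is therefore exactly
$$\frac{X}{d} - p \;\ge\; t\,\frac{N-d}{N} \;=\; t\Bigl(1-\frac{d}{T(T-1)}\Bigr) \;=\; t'.$$
This identity is the one step that needs care: it must yield precisely the factor in $t'$, not its reciprocal. The computation above confirms it does.

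Finally, we apply Hoeffding's inequality for sampling without replacement (Hoeffding 1963, Theorem 4), which states that the sum of $d$ draws without replacement of $\{0,1\}$-valued items obeys the same tail bound as i.i.d. draws. This gives
$$\mathbb{P}\bigl(X/d - p \ge t'\bigr) \le e^{-2t'^2 d} \quad\text{for } t' \ge 0.$$
When $t' \le 0$ the claimed bound is at least $1$ and holds trivially. The main conceptual obstacle is justifying that the $A$ values really are fixed and independent of $\mathbf{sk}$ under the null. This holds because the model was not trained on the key-dependent data, and the statement is about the population $C$ rather than the empirical $\hat C$.
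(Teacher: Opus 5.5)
Your argument for $t \ge 0$ is the same as the paper's proof. It uses the same hypergeometric count $X$, the same rearrangement of $\mathrm{score}\ge t$ into $X/d - p \ge t'$, and the same appeal to Hoeffding's inequality for sampling without replacement. Your identity $\mathrm{score} = \frac{N}{N-d}\,(X/d - p)$ is a clean way to get the rearrangement, and it is correct. You also keep the non-strict inequality from the definition of $\mathrm{Detect}$, where the paper writes $>$.

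Your last step fails: for $t'<0$ the bound is \emph{not} at least $1$. The quantity $e^{-2t'^2 d}$ depends only on $t'^2$. It equals $1$ at $t'=0$ and is strictly below $1$ for every $t' \neq 0$.

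No argument can rescue the negative case, because the statement is false there:
\begin{itemize}
\item The score always lies in $[-1,1]$. So for $t \le -1$ the detector fires with probability $1$, while $e^{-2t'^2 d}<1$.
\item For any $t<0$, choose $\tau$ above every $A_{w_i,w_j}$ (possible since there are finitely many pairs). Then $K = X = 0$, the score is identically $0$, and again $\mathbb{P}(\mathrm{Detect}=1)=1$.
\end{itemize}

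So the ``for all $t\in\mathbb{R}$'' in the statement must be read as $t \ge 0$, equivalently $t'\ge 0$ when $d<T(T-1)$. That is exactly the range where the one-sided Hoeffding bound applies, and the paper's proof makes this restriction silently. You should state the hypothesis $t\ge 0$ explicitly instead of claiming the negative case holds trivially.
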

\begin{proof} Since $\mathcal{M}$ is fixed, $A_{w_i, w_j}, \forall w_i, w_j \in \mathcal{V}$ is fixed, and the randomness only comes from \textbf{sk}. Let $$K = \sum_{w_i, w_j \in \mathcal{V}}\mathds{1}[A_{w_i, w_j} \geq \tau]$$ be the total number of word pairs whose deviation exceeds the threshold $\tau$. Denote $X$ as the random variable $$\sum_{w_i, w_j \in \textbf{sk}} \mathds{1}[A_{w_i, w_j} \geq \tau]$$ note that $\textbf{sk}$ is sampled uniformly at random without replacement from the set of all $\mathcal{V}$, hence X follows the hypergeometric distribution. Then by definition:
    \begin{align*}
    & \mathbb{P}_{\textbf{sk}}(Detect(\mathcal{M}, \textbf{sk}, t) = 1) \\
    & = \mathbb{P}_{\textbf{sk}}(\frac{X}{d} - \frac{K-X}{{T(T-1)-d}} > t) 
    \end{align*}
    Rearranging this inequality yields
    $$= \mathbb{P}_{\textbf{sk}} (X > td(1-\frac{d}{T(T-1)})+\frac{dK}{T(T-1)})$$
    Define $p=\frac{K}{T\cdot (T-1)}$, $t' = t (1-\frac{d}{T(T-1)})$,
$$ = \mathbb{P}_{\textbf{sk}} (X >d(p+t')$$
Since the expectation of $X$ under the hypergeometric distribution is $\mathbb{E}[X]=dp$, applying Hoeffding's inequality for sampling without replacement, we have
\begin{align*}
     & \mathbb{P}_{\textbf{sk}}(Detect(\mathcal{M}, \textbf{sk}) = 1)  \\ 
    & = \mathbb{P}_{\textbf{sk}} (X > d (p + t')) \leq e^{-2t'^2d}
\end{align*}
\end{proof}

\end{document}